\def\eqref#1{equation~\ref{#1}}
\def\1{\bm{1}}
\DeclareMathAlphabet{\mathsfit}{\encodingdefault}{\sfdefault}{m}{sl}
\SetMathAlphabet{\mathsfit}{bold}{\encodingdefault}{\sfdefault}{bx}{n}
\newtheorem{theorem}{Theorem}[section]
\newtheorem{proposition}[theorem]{Proposition}
\theoremstyle{definition}
\newtheorem{definition}[theorem]{Definition}
\title{Rethinking Large Language Model Distillation: A Constrained Markov Decision Process Perspective}
\author{Matthieu Zimmer\thanks{Equal contribution.} \\
Huawei Noah’s Ark Lab \\
\texttt{matthieu.zimmer@huawei.com} 
\And 
Xiaotong Ji\footnotemark[1] \\
Huawei Noah’s Ark Lab \\
\texttt{xiaotong.ji1@h-partners.com} \\
\And
Tu Nguyen \\
Huawei R\&D Munich \\
\texttt{tu.nguyen@huawei.com} \\
\And
Haitham Bou Ammar \\
Huawei Noah’s Ark Lab \\
UCL Centre for Artificial Intelligence \\
\texttt{haitham.ammar@huawei.com}
}
\begin{document}

\maketitle

\begin{abstract}
We introduce a novel approach to large language model (LLM) distillation by formulating it as a constrained reinforcement learning problem. While recent work has begun exploring the integration of task-specific rewards into distillation processes, existing methods typically rely on ad-hoc reward weighting.
We propose a principled optimization framework that maximizes task-specific rewards while constraining the divergence from the teacher model to remain below a specified threshold. Our approach adapts constrained state augmented reinforcement learning to the distillation setting, introducing a modified reward function that maintains theoretical guarantees of constraint satisfaction without requiring state augmentation or teacher model access during deployment and without the computational overhead of the dual Lagrangian methods. Through extensive experiments on mathematical reasoning tasks, we demonstrate that our method achieves better constraint satisfaction rates and better reasoning compared to the soft Lagrangian relaxation baselines while maintaining competitive task performance. Our framework provides a theoretically grounded and practically efficient solution for reward-aware distillation in resource-constrained settings.
\end{abstract}

\section{Introduction}
Large Language Models (LLMs) have achieved remarkable success in a wide range of natural language processing tasks \citep{vaswani2017attention,trinh2024solving,chervonyi2025goldmedalistperformancesolvingolympiad,guo2025deepseek,christianos2023pangu}, but their size and complexity make them impractical for deployment in resource-constrained environments. Distillation \citep{hinton2015distilling,czarnecki2019distilling}, a technique where a smaller student model learns from a larger teacher model, has been widely used to transfer knowledge while reducing computational costs. Conventional distillation methods \citep{sanh2020distilbertdistilledversionbert,minillm,distillm} typically focus on minimizing the divergence between the student and teacher models, often using metrics such as Kullback-Leibler (KL) divergence. However, these methods do not fully leverage additional reward signals that can provide valuable guidance, particularly in tasks requiring complex reasoning. 
Focusing solely on the KL divergence can lead to suboptimal learning, as it may force students to mimic complex reasoning paths that exceed their \textit{capacity} rather than discovering simpler, equally effective reasoning paths.
In contrast, a method that purely optimizes for reward cannot guarantee that the reasoning leading to the solution is correct.
When reward signals are considered together with KL, 
\citet{agarwal2024policy} propose to focus on a penalty method where a hyperparameter $\lambda$ is introduced to balance the preference between reward and KL.

In this paper, we propose a novel approach to LLM distillation by formulating it as a \textbf{constrained reinforcement learning} (RL) problem. 
Specifically, we aim to maximize the task reward while ensuring that the divergence between the student and teacher models stays below a predefined \textit{threshold}.
Although choosing the threshold likewise balances the reward--teacher divergence trade-off as does tuning the hyperparameter $\lambda$, \textit{it is far simpler}, since it is specified directly in terms of KL scale rather than requiring a delicate balance between values that may vary greatly in scale across different stages of training when adjusting $\lambda$.
Finally, when the student is deemed to be close enough to the teacher, i.e. when the constraint is satisfied, the objective conveniently reduces to reward maximization, as the KL term can be safely omitted.

\begin{figure}[ht]
    \begin{minipage}{0.50\textwidth}
    \centering
    \begin{tikzpicture}[
        boxa/.style={
            draw,
            rectangle,
            rounded corners,
            fill=green!5,
            text width=0.495\linewidth,
            align=left,
            font=\tiny,
        },
        boxb/.style={
            draw,
            rectangle,
            rounded corners,
            fill=red!5,
            text width=0.47\linewidth,
            align=left,
            font=\tiny,
        },
        box_hoz/.style={
            draw,
            rectangle,
            rounded corners,
            fill=blue!5,
            text width=0.492\linewidth,
            align=left,
            font=\tiny,
        },
        node distance=0.03cm
    ]
        
    \node (box2) [boxa] {
        \textbf{Ours}:\\[0.2cm]
        To determine what percentage of the whale's body length the combined length of the remoras is, we need to follow these steps:\par
        \textbf{1. Convert the length of the remoras from inches to feet:}
        \setlength{\abovedisplayskip}{3pt}
        \setlength{\abovedisplayshortskip}{3pt}
        \setlength{\belowdisplayskip}{3pt}
        \setlength{\belowdisplayshortskip}{3pt}
        \[ 45 \text{ inches} \times \frac{1 \text{ foot}}{12 \text{ inches}} = 3.75 \text{ feet} \]
        So, each remora is 3.75 feet long.\par
        \textbf{2. Calculate the total length of the remoras:}
        \[4 \text{ remoras} \times 3.75 \text{ feet per remora} = 15 \]
        \textbf{3. Determine the total length of the whale:}
        \vspace{-0.2cm}
        \[
           300 \text{ feet}
        \]\par
        \textbf{4. Calculate the percentage of the whale's body length that the combined length of the remoras represents:}
        \[
           \left( \frac{15 \text{ feet}}{300 \text{ feet}} \right) \times 100\%
        \]
        \textbf{5. Simplify the fraction:}
        \[
           \frac{15}{300} = 0.05
        \]\par
        \textbf{6. Convert the decimal to a percentage:}
        \[
           0.05 \times 100\% = 5\%
        \]
        Therefore, the combined length of the remoras is \(\boxed{5}\) percent of the whale's body length.
    };
    
    \node (box3) [boxb, right=of box2,yshift=0.74cm] {
        \textbf{GRPO $\lambda=0$}:\\[0.2cm]
        \setlength{\abovedisplayskip}{3pt}
        \setlength{\abovedisplayshortskip}{3pt}
        \setlength{\belowdisplayskip}{3pt}
        \setlength{\belowdisplayshortskip}{3pt}
Let's break down the problem step by step to find the percentage of the whale's body length that the combined length of the remoras represents.\par
First, we need to convert all units to the same unit. Let's use feet for this calculation.\par
The length of each remora is given as 45 inches. Since there are 12 inches in a foot, the length of each remora in feet is:\vspace{-0.2cm}
\[ \frac{45}{12} = 3.75 \text{ feet} \]
Since there are 4 remoras, the total length of the remoras is:
\[ 4 \times 3.75 = 15 \text{ feet} \]
Next, we add the length of the whale to the combined length of the remoras to get the total length of the whale:
\vspace{-0.2cm}
\[ 300 + 15 = 315 \text{ feet} \]
Now, we need to find what percentage the combined length of the remoras is of the whale's body length. This can be calculated using the formula for percentage:
\[ \left( \frac{15}{315} \right) \times 100 \]
Simplifying the fraction inside the percentage formula:
\[ \frac{15}{315} = \frac{1}{21} \]\vspace{-0.2cm}
\[ \left( \frac{1}{21} \right) \times 100 = \frac{100}{21} \approx 4.7619 \]
Rounding to the nearest whole number, we get: 
\( \boxed{5} \)
    };
    \coordinate (midpoint_2_3) at ($(box2.center)!0.5!(box3.center)$);

    \node (box1) [box_hoz, above=of midpoint_2_3,xshift=-1.835cm,yshift=4.25cm] {
        \textbf{Question}:\\
        Leilani saw a 300-foot whale with 4 45-inch remoras attached to it. What percentage of the whale's body length is the combined length of the remoras?
    };
    
    \end{tikzpicture}
    \caption{Example illustrating that checking the final answer alone is insufficient for evaluating reasoning. GRPO (right) makes mistakes and reaches a wrong answer (4.76) but takes an extra rounding step to the correct one (5), likely as a learned strategy through training.}
    \label{fig:qwen2_quali1_short}
    \end{minipage}
    \hfill
    \begin{minipage}{0.475\textwidth}
        \centering 
        \includegraphics[width=\textwidth]{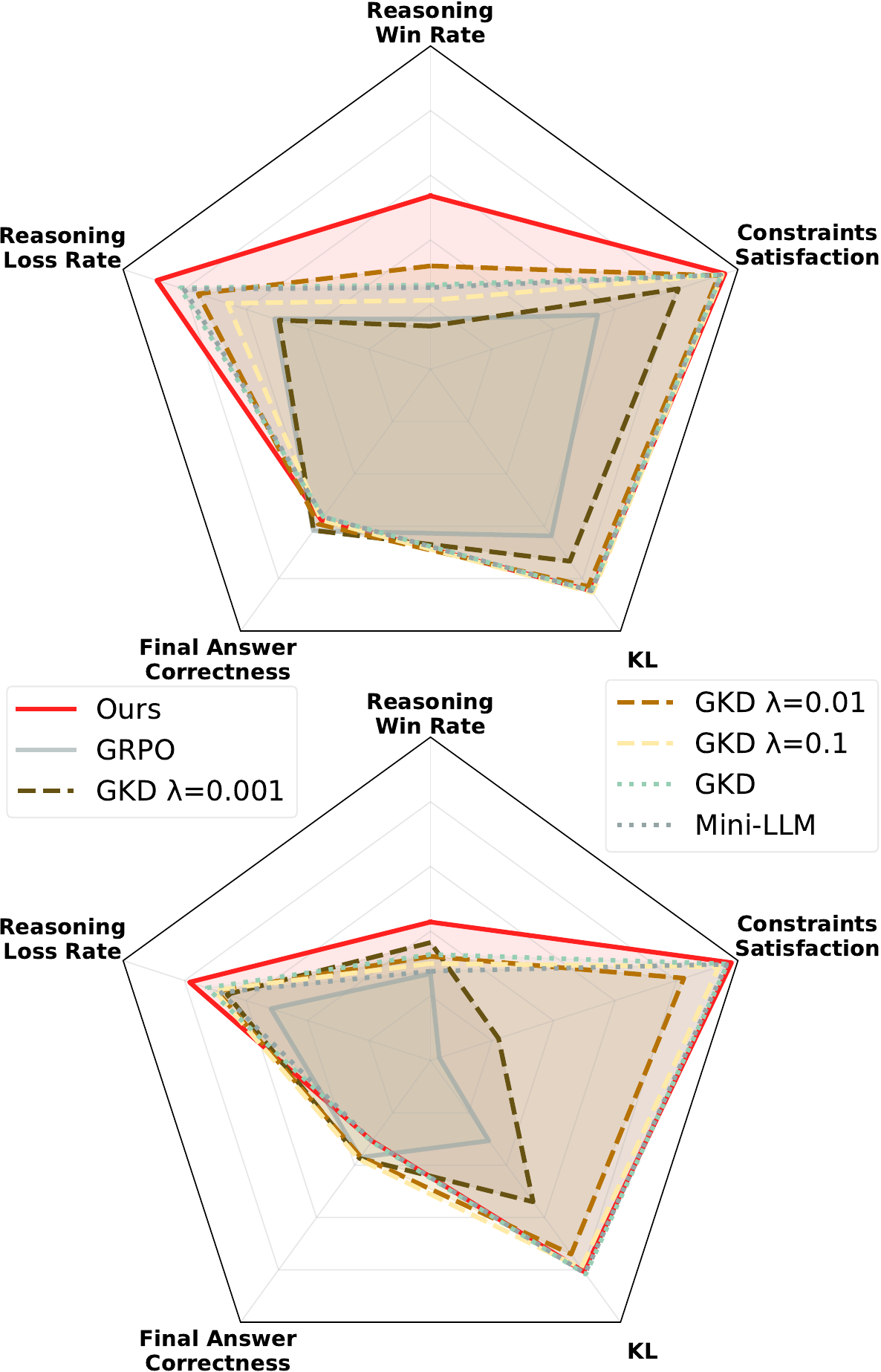}
    \vspace{0.0001cm}
        \caption{Comparison of our method against baselines across with Qwen2.5-1.5B-Math (top) and Llama-3.2-3B (bottom) averaged across three evaluation datasets. To ensure bigger surface means better results, the reasoning loss rate and the KL divergence were inverted.}
        \label{fig:radar_qwen}
    \end{minipage}
\end{figure}
Solving our new constrained RL problem follows standard methods for constraint optimization in which we write a dual Lagrangian optimization problem \citep{10.5555/3305381.3305384,boyd2004convex,altman1999constrained}, but it would be impractical to solve with LLM because of the huge computational cost of solving a max-min problem with large teacher models. Instead, we adopt a \textbf{state augmentation} method known as Saute \citep{sootla2022saute,sootla2022enhancing,ji2025almost}. It relaxes the constrained optimization problem by formulating a new state-augmented Markov Decision Process (MDP) with a reformulated reward function. This approach not only changes the reward but also introduces a new state space that helps in maintaining the theoretical guarantees of the original constraints without the need for explicit Lagrangian multipliers.
However, Saute assumes that it can compute the constraint in every state. For distillation, it would result in the need to have access to the teacher model at test time, which  fundamentally defeats the purpose of distillation. To address this issue, we modify the Saute method by removing the state augmentation step using the assumption that the policy is \textit{history-conditioned}, which is the case for LLM. This modification allows us to maintain the \textbf{theoretical guarantees} of Saute while ensuring that the student model can operate independently of the teacher at test time. By reformulating the reward function alone, we achieve a more efficient and practical solution for distillation.

Through extensive experiments, we demonstrate that our proposed method effectively minimizes the KL divergence while achieving superior performance in terms of reasoning quality and comparable final answer correctness (see Figure~\ref{fig:radar_qwen}). 
We show that reward maximization alone, as proposed in \citet{guo2025deepseek}, cannot guarantee correct reasoning steps by itself and that the teacher signal is useful for LLM to better reason (see Figure~\ref{fig:qwen2_quali1_short}).

Our contributions are summarized as follows:\vspace{-0.2cm}
\begin{itemize}[leftmargin=1.5em]
    \item We formulate LLM distillation as a constrained RL problem, integrating task-specific reward signals to guide the distillation process.
    \item We adapt the Saute method by removing the state augmentation step, ensuring the student model operates independently of the teacher at test time while maintaining the \textbf{theoretical guarantees} and \textbf{enhancing exploration} on constraint-violating trajectories.
    \item We conduct extensive experiments on mathematical reasoning tasks to demonstrate that our method identifies a notable point on the Pareto front balancing divergence minimization, reward maximization, and reasoning quality.
\end{itemize}\vspace{-0.2cm}

This work bridges the gap between distillation and constrained RL, offering a promising direction for improving the efficiency and effectiveness of knowledge transfer in LLMs.

\section{Background}

\subsection{Distillation}
Knowledge distillation has emerged as a critical technique for transferring knowledge from large, complex teacher models to smaller, more efficient student models \citep{hinton2015distilling}. Standard distillation methods primarily focus on minimizing the divergence, often Kullback-Leibler (KL) divergence, between the student and teacher models~\citep{ba2014deep, gou2021knowledge}, treating the distillation as a supervised imitation at the token or representation level. While effective for general language understanding tasks, these methods struggle on \textbf{complex reasoning tasks}: minimizing solely the divergence while ignoring task-specific reward signals can fail to capture the solution paths with better performance. For instance, in mathematical reasoning tasks, the teacher model may rely on complex reasoning paths that are difficult for a smaller student model to replicate due to its limited capacity, while alternative, simpler reasoning strategies that achieve the same correct outcome might be more suitable for the student to learn and memorize \citep{zhang-etal-2025-towards-law}.

Recent advances incorporate reward signals into distillation \citep{agarwal2024policy}, recasting it as a policy-optimization problem in which the student policy $\pi$ is trained to maximize expected task reward $R$ while being regularized by a divergence $D(\pi,\mu)$ to teacher policy $\mu$:
\begin{equation}\label{eq:lag}
\max_{\pi} \mathbb{E}_\pi \left[ R - \lambda D(\pi, \mu) \right],
\end{equation}
where $\lambda$ controls the trade-off between the task performance and teacher fidelity. However, the optimal $\lambda$ is difficult to anticipate and requires extensive retraining on specific tasks, making this approach unstable and computationally expensive for large sequential models. This challenge motivates viewing distillation instead as a \textbf{constrained learning problem} that can directly maximize the task reward subject to a divergence budget. This perspective eliminates ad hoc hyperparameter tuning while providing interpretable fidelity guarantees and a principled foundation for reasoning-oriented distillation.

\subsection{Constrained Reinforcement Learning}
\label{subsec:constrained_MDP}
Constrained reinforcement learning (CRL) addresses the problem of optimizing a primary objective while satisfying constraint requirements (e.g., safety) \citep{10.5555/3305381.3305384}. In LLM distillation, we can constrain the divergence between the teacher and student policy, following the constrained MDP formulation $\mathcal{M}_d = \langle \mathcal{S}, \mathcal{A}, \mathcal{P}, R, C, \gamma, d \rangle$, where $\mathbf{s}_t$ is the current prompt with partial response, the action $\mathbf{a}_t$ is the next token generated by the student model, $\mathcal{P}$ is the transition kernel, $R$ is the task-specific reward (e.g., correctness in mathematical reasoning), $C_\pi(\mathbf{s_t}):=D_f\left( \pi(\cdot | \mathbf{s}_t) || \mu(\cdot | \mathbf{s}_t) \right)$ is the per-state $f$-divergence between student $\pi$ and teacher $\mu$, $\gamma \in (0,1)$ is the discount factor and $d$ is a predefined budget. The goal is to find a policy $\pi$ that maximizes the task reward while keeping the expected divergence lower than the threshold $d$:
\begin{equation}
\max_{\pi} \mathbb{E}_\pi \left[ \sum_{t=0}^\infty \gamma^t R(\mathbf{s}_t, \mathbf{a}_t) \right] \quad \text{s.t.} \quad \mathbb{E}_\pi \left[ \sum_{t=0}^\infty C_{\pi}(\mathbf{s}_t) \right] \leq d.
\end{equation}

This kind of constrained problem can be solved with a direct optimization: \citet{sootla2022saute} introduced a state augmentation method that reformulates the constrained MDP as an augmented MDP $\widetilde{\mathcal M}_{d}^{n} = \langle \tilde{\mathcal{S}}, \mathcal{A}, \tilde{\mathcal{P}}, \tilde{R}_{n}, \gamma, d \rangle$ by adding a auxiliary state variable $\mathbf{z}_t$ that tracks the remaining budget at every time step $t$, $\mathbf{z}_{t+1} = \mathbf{z}_t - C_{\pi}(\mathbf{s}_t)$, $ \mathbf{z}_0 = d$,
transforming the problem into:
\begin{equation}\label{eq:saute_unconstrained}
\max_{\pi} \mathbb{E}_\pi \left[ \sum_{t=0}^\infty \gamma^t \tilde{R}_n(\mathbf{s}_t, \mathbf{z}_t, \mathbf{a}_t) \right], \quad 
\tilde{R}_n(\mathbf{s}_T, \mathbf{z}_T, \mathbf{a}_T) = 
\begin{cases} 
R(\mathbf{s}_T, \mathbf{a}_T) & \text{if } \mathbf{z}_T \geq 0, \\
- n & \text{if } \mathbf{z}_T < 0,
\end{cases}
\end{equation}

Here $\tilde{\mathcal{S}}=\mathcal{S} \times \mathcal{Z}$ is the augmented state space, $\tilde{\mathcal{P}}:  \tilde{\mathcal{S}} \times \mathcal{A} \times \tilde{\mathcal{S}} \rightarrow [0, 1]$ is the transition kernel, and $\tilde{R}_n$ is a constrained reward function with a large positive $n\gg R_{
\max}$ for penalization when the budget $d$ is exhausted. As $n\rightarrow \infty$, any optimal policy of the augmented MDP is feasible for the constraint and attains the constrained optimum under standard assumptions. This method avoids the computational overhead of Lagrange multipliers formulation (cf. \eqref{eq:lag}), which can be written as $\max_{\pi} \min_{\lambda \geq 0} \mathbb{E}_\pi \left[ \sum_{t=0}^\infty \gamma^t R(\mathbf{s}_t, \mathbf{a}_t) - \lambda \left( \sum_{t=0}^\infty C_{\pi}(\mathbf{s}_t) - d \right) \right]$ for the same formulation, and typically requires tuning a dual variable and running dual ascent.

However, directly applying this formulation to distillation would require maintaining the augmented state $\mathbf{z}_T$ online during distillation, which would necessitate access to the teacher model at test time to compute $C_{\pi}$ at every timestep. This is impractical for distillation, where the goal is to create a student model that operates independently of the teacher. In the following section, we address this challenge by proposing a new formulation for LLM distillation to eliminate the need for state augmentation while preserving the theoretical guarantees.

\section{Method}
\subsection{Constrained RL for LLM Distillation}
We introduce a constrained MDP formulation for distillation that removes state augmentation while retaining the hard-constraint semantics, therefore enabling constrained RL without accessing the teacher policy at every single step. In LLM distillation, we model the state as the full interaction history, so the induced control process is fully observable. Therefore, removing the augmented state $\mathbf{z}_T$ in~\eqref{eq:saute_unconstrained} from the state does not induce partial observability. At any time $T$, we can recompute the remaining budget from the full observed history encoded in $\mathbf{s}_T$, hence the augmented state $\mathbf{z}_T$ is a deterministic function of the state with $\mathbf{z}_T = d - \sum_{t=0}^{T-1} C_{\pi}(\mathbf{s}_t)$.

We propose a constrained MDP formulation for LLM distillation \textbf{without} state augmentation $\widehat{\mathcal{M}}^n_d = \langle \mathcal{S}, \mathcal{A}, \mathcal{P}, \hat{R_{n}}, \gamma, d \rangle$, where $\hat{R}_{n}$ is the constrained reward that combines the task-specific reward with a feasibility signal for constraint satisfaction. The goal is to find a policy $\pi$ that maximizes the task-specific reward while keeping the divergence lower than the threshold $d$:
\begin{equation}\label{eq:our_prob}
\max_{\pi} \mathbb{E}_\pi \left[ \sum_{t=0}^\infty \gamma^t \hat{R}_{\pi, n}(\mathbf{s}_t, \mathbf{a}_t) \right], \quad 
\hat{R}_{\pi, n}(\mathbf{s}_T, \mathbf{a}_T) = 
\begin{cases} 
R(\mathbf{s}_T, \mathbf{a}_T) & \text{if } \color{PineGreen}{d - \sum_{t=0}^{T-1} C_{\pi}(\mathbf{s}_t) \geq 0}, \\
- (n \color{PineGreen}{+ \phi_{\pi}(\mathbf{s_T})} \color{black}) & \text{otherwise.}
\end{cases}
\end{equation}

The constrained reward \textbf{without the augmented state $\mathbf{z}_{T}$} preserves the feasibility signal for the constraint satisfaction, such that the student model receives the positive task-specific reward only when the constraint with budget $d$ is satisfied, while any trajectory that violates the constraint incurs a large hard penalty. This penalty is a fixed value in the previous setting~\eqref{eq:saute_unconstrained} for all infeasible trajectories, we refine this penalty by adding \textbf{a policy-dependent discrepancy term $\phi_{\pi}(\mathbf s_{T})$} to differentiate the trajectories within the infeasible region: trajectories that deviate more from the teacher policy receive a stronger penalty, whereas marginally deviating ones are penalized less. We define $\phi_{\pi}(\mathbf s_T)$ as any $f$-divergence, including $\mathrm{KL}$ and Jensen–Shannon divergence, between the student and teacher at $\mathbf s_T$, which is nonnegative and equals zero iff $\pi(\cdot\mid \mathbf s_T)=\mu(\cdot\mid \mathbf s_T)$. Therefore, the penalty $-(n+\phi_{\pi} (\mathbf{s}_T))$ is strictly negative, while in feasible region we maintain the original task specific reward to guide exploration. This formulation preserves the augmented-MDP penalty semantics and increases sample efficiency by delivering informative negative feedback among violating trajectories, without altering feasibility decisions or the limiting optimum.

\subsection{Policy Gradient Optimization} 
We detail policy gradient optimization for the unaugmented objective in~\eqref{eq:our_prob}, and derive the policy gradient decomposition with an explicit-dependence term. Our method directly maximizes the expected discounted return with standard policy gradient, thereby avoiding the instabilities from infeasible gradient vector fields in on-policy distillation observed by~\cite{czarnecki2019distilling}. We parameterize the student policy as $\pi_\theta$ and maximize the expected discounted return:
\[J_n(\theta) = \mathbb{E}_{\pi_\theta}\!\Big[\sum_{t=0}^\infty \gamma^t \hat{R}_{\pi_\theta, n}(\mathbf{s}_t, \mathbf{a}_t)\Big]\]
Because $J_n(\theta)$ depends on $\theta$ both through the trajectory distribution induced by $\pi_\theta$ and inside the reward via the discrepancy $\phi_{\pi_\theta}$, its gradient decomposes into (I) the likelihood-ratio term and (II) the explicit dependence term of $\hat{R}_{\pi_\theta, n}$ on $\theta$:
\begin{equation}\label{eq:grad-split}
\nabla_\theta J_n(\theta)
= \underbrace{\mathbb{E}_{\pi_\theta}\!\Bigg[\sum_{t \ge 0} 
\nabla_\theta \log \pi_\theta(\mathbf{a}_t \mid \mathbf{s}_t)\,
\Big(\sum_{u \ge t} \gamma^{u-t}\, \hat{R}_{\pi_\theta, n}(\mathbf{s}_u, \mathbf{a}_u)\Big)\Bigg]}_{\text{(I) likelihood-ratio term}}
\;+\; \underbrace{\mathbb{E}_{\pi_\theta}\!\Big[\sum_{t \ge 0}\gamma^t \,\partial_\theta \hat{R}_{\pi_\theta, n}(\mathbf{s}_t,\mathbf{a}_t)\Big]}_{\text{(II) explicit-dependence term}}
\end{equation}

We compute $\nabla_\theta J_n(\theta)$ following the policy gradient theorem~\cite{sutton1999policy} under the following minimal assumptions:

\textbf{A1.} For each state $\mathbf s_T$, $\phi_{\pi_\theta}(\mathbf s_T)$ is finite and differentiable in $\theta$, and its gradient is measurable and integrable along trajectories $\mathbb{E}_{\pi_\theta}\!\big[\sum_{t\ge 0}\gamma^t \|\partial_\theta \phi_{\pi_\theta}(\mathbf s_t)\|\big]<\infty$;

\textbf{A2.} There exists an optimal policy $\pi^{*}_{\theta}$ with a finite value such that $\mathbb P\!\Big(d - \sum_{t=0}^{T-1} C_{\pi^{*}_{\theta}}(\mathbf{s}_t)>0 \Big)=1$.

In practice, we take \(\phi=\mathrm{KL}\) with a small probability floor, ensuring finiteness and differentiability. A2 ensures the existence of an optimal feasible policy, i.e., the budget is satisfied almost surely at the optimum. Under A1 and A2, we can characterize the explicit-dependence term (II) in a unified way (see Appendix~\ref{app:gradient} for the full derivation across feasible, infeasible, and boundary cases) by including the gradient and limiting sub-gradient of $\hat{R}_{\pi_\theta, n}$ with a small tolerance $\varepsilon \downarrow 0$ round the feasibility boundary. Our final gradient for optimization is
\[
\begin{aligned}
\nabla_\theta J_n(\theta)
&=\mathbb{E}_{\pi_\theta}\!\Bigg[\sum_{t \ge 0} 
\nabla_\theta \log \pi_\theta(\mathbf a_t \mid \mathbf s_t)\,
\Big(\sum_{u \ge t} \gamma^{u-t}\, \hat{R}_{\pi_\theta, n}(\mathbf s_u, \mathbf a_u)\Big)\Bigg] \\[4pt]
&\quad-\;
\mathbb{E}_{\pi_\theta}\!\Bigg[\sum_{t \ge 0}\gamma^t \,
\mathbbm{1}\!\left\{\,d-\sum_{u=0}^{t-1}C_{\pi_\theta}(\mathbf s_u) \le \varepsilon\,\right\}
\,\partial_\theta \phi_{\pi_\theta}(\mathbf s_{t})\Bigg]
\end{aligned}
\]

\subsection{Theoretical Guarantee for Constraint Satisfaction} 
In this section, we show that our reformulation of the constrained MDP preserves the constraint satisfaction guarantee while enabling deployment without teacher access. In particular: (i) the optimal policy and value functions are equivalent between our un-augmented objective in~\eqref{eq:our_prob} and the augmented objective in~\eqref{eq:saute_unconstrained}; (ii) Bellman optimality holds under standard assumptions; and (iii) as $n \to \infty$, every optimal policy with finite value satisfies the constraint almost surely.

In LLM distillation, the student policy $\pi$ is frozen within each episode, so the induced control process is time-homogeneous. We adopt this per-episode stationary view; all statements are uniform over a fixed $\pi$ on the reachable set. We further formalize an equivalent \emph{contextual MDP} view, in which each episode carries a fixed context $c$ (e.g., a policy checkpoint), and prove its optimality-equivalence to the standard MDP in Appendix~\ref{app:context-mdp}.

\begin{theorem}[Optimal equivalence]
For every feasible state $\mathbf{s}_T$, the optimal value functions of the unaugmented MDP $\widehat{\mathcal{M}}_d^n$ in \eqref{eq:our_prob} and the augmented MDP $\widetilde{\mathcal{M}}_d^n$ in \eqref{eq:saute_unconstrained} are equivalent:
\[
\hat{V}^{*}(\mathbf{s}_T) = \tilde{V}^{*}(\mathbf{s}_T,\mathbf{z}_T).
\]
\end{theorem}

This theorem justifies that removing the budget variable $\mathbf z_t$ does not change the control problem we are solving. This equivalence holds because the augmented state $z_T$ is \emph{reconstructable} from the observed history under any fixed student $\pi$ and teacher $\mu$ via $\mathbf z_T = d - \sum_{t=0}^{T-1} C_\pi(\mathbf s_t)$, so augmented states $(\mathbf s_T,\mathbf z_T)$ and un-augmented states $\mathbf s_T$ induce identical trajectories and stepwise rewards along any feasible paths. We give the precise construction and full proof details in Appendix~\ref{app:theo}.

We adopt the following standard assumptions~\cite{hernandez1992discrete,sootla2022saute} for the discrete token setting in distillation:

\textbf{B1.} The reward function $\hat{R}_n(\mathbf{s}_T, \mathbf{a}_T)$ is bounded, measurable, and upper semicontinuous on $\mathcal{S}\times\mathcal{A}$;

\textbf{B2.} The transition kernel $\mathcal{P}$ is weakly continuous on $\mathcal{S}\times\mathcal{A}$; \quad
\textbf{B3.} The action space $\mathcal{A}$ is compact.

\begin{theorem}[Bellman optimality and value convergence]
Consider the unaugmented MDP $\widehat{\mathcal{M}}_d$, satisfying assumption B1-B3 with the associated~\eqref{eq:our_prob}, then:

a) the Bellman equation is satisfied in $\widehat{\mathcal{M}}_d$; 

b) the optimal value function $\hat{V}^{*}_{n}$ for $\widehat{\mathcal{M}}^n_d$ converges monotonically to $\hat{V}^*_\infty$ for $\widehat{\mathcal{M}}^{\infty}_d$.
\end{theorem}

\begin{theorem}[Almost surely constraint satisfaction]
If there exists an optimal policy $\pi^*$ solving $\widehat{\mathcal{M}}^\infty_d$ with a finite value, then $\pi^*$ is also an optimal policy for the original constrained MDP $\mathcal{M}_d$ and satisfies the constraint almost surely.
\end{theorem}

These results show that our modified approach maintains the guarantees of the original constrained problem while eliminating state augmentation (see Appendix~\ref{app:theo} for proofs and discussion). At test time, the student operates without teacher access: the cumulative reward is computed from the student’s own output distribution and environment feedback. This makes our approach practical for LLM distillation while retaining guarantees of feasibility.

\section{Experiments}
\subsection{Experimental Setup}
We conduct experiments on two distinct distillation settings to evaluate our proposed method. For the first setting, we distill a \textit{Qwen2.5-1.5B-Math} student model from a \textit{Qwen2.5-7B-Math-Instruct} teacher model using the GSM8K training dataset. For the second setting, we distilled a \textit{Llama-3.2-3B} student model from a \textit{Llama-3.2-11B-Instruct} teacher model using the MATH training dataset.
In both setting, we evaluated the resulting checkpoints after 20 epochs on the Apple/GSM-Symbolic (main) \citep{mirzadeh2025gsmsymbolic}, the test set of GSM8K \citep{gsm8k} and the whole test set of MATH \citep{hendrycksmath2021} (from which MATH500 is selected). 

\paragraph{Baselines.} Our proposed constrained optimization method is built upon the GRPO policy gradient algorithm \citep{grpo}. To assess its effectiveness, we benchmark against several strong distillation baselines, each re-implemented under the same GRPO framework to ensure fairness and consistency.
More precisely, for every method, the batch size and its composition is the same (64 answers, 8 questions, 8 answers per question).
The learning rate ($1e^{-5}$) and the optimizer (AdamW) are also the same.
We consider the following baselines:\vspace{-0.2cm}
\begin{itemize}[leftmargin=1.5em]
    \item \textbf{GRPO}: The base algorithm in our experiments. GRPO optimizes purely for the \emph{task-specific reward} using a robust, value-function-free policy gradient with a group-average reward baseline~\citep{grpo}.
    \item \textbf{GKD}: A distillation-only baseline whose objective is to minimize the \textit{reverse} KL divergence $D_{\mathrm{KL}}(\pi_\theta \,\|\, \mu)$, treating the negative per-step KL as an intrinsic reward. We use GRPO rather than the REINFORCE-style update of \citet{agarwal2024policy} for consistency.    

    \item \textbf{GKD-GRPO}: A baseline that jointly optimizes for both the task-specific reward and the GKD objective. This corresponds to the standard Lagrangian relaxation of our constrained problem in Eq.~(\ref{eq:lag}), with $\lambda$ as the balancing hyperparameter \citep{agarwal2024policy}.

    \item \textbf{Mini-LLM}: On-policy \emph{reverse} KL divergence minimization~\citep{minillm}, accounting for the long-term effects of actions on KL~\citep{tang2025few}. As in GKD, task reward is ignored. For consistency, we sample trajectories exclusively with the student policy and substitute PPO with a GRPO-based update.
\end{itemize}\vspace{-0.2cm}
Together, these baselines span the main approaches to RL-based distillation: optimizing task rewards, relying solely on KL supervision, and hybrid formulations that combine both. To approximate the Pareto frontier of the Lagrangian relaxation baseline (GKD-GRPO), we perform a grid search over the multiplier $\lambda$ across several orders of magnitude, reporting results for $\lambda \in \{0.001, 0.01, 0.1, 1.0, 10\}$. Note that when $\lambda = 0$, it equals to the pure GRPO baseline.
The constraint threshold $d=0.35$ was selected based on preliminary experiments that seek to minimize only the KL (mini-LLM and GKD).

\paragraph{Metrics.} We evaluate models using four key metrics:\vspace{-0.2cm}
\begin{itemize}[leftmargin=1.5em]
    \item \textbf{Final Answer Correctness} (FAC): It verifies that the final answer inside \textbackslash boxed\{\} is correct. It is used to define the reward function $R$ in our MDPs.

    \item \textbf{Reasoning Quality}: To assess the logical validity of the reasoning path beyond the final answer, we use an LLM-as-a-Judge setting \citep{zheng2023judging}. Specifically, we use \textit{DeepSeek-R1-Distill-Qwen-32B} \citep{deepseekai2025deepseekr1incentivizingreasoningcapability} to perform pairwise comparisons between generated solutions. The judge is provided with the correct final answer to isolate its evaluation to the reasoning process itself. This yields the \textit{Reasoning Win Rate} (\textbf{RWR}) and \textit{Reasoning Loss Rate} (\textbf{RLR}), reported as percentages \citep{zhou2025accelerating}.

    \item \textbf{Constraint Satisfaction}: The percentage of test samples where the KL divergence between the student and teacher policies is below a predefined threshold $d$.

    \item \textbf{KL Divergence}:  The average student-teacher policy divergence cross the entire test set.
\end{itemize}

\subsection{Experiment Results}

We organize our set of experiments to answer the following questions: \vspace{-0.2cm}
\begin{enumerate}[leftmargin=2.2em,label=\Alph*.]
    \item What is the best method in general? \vspace{-0.1cm}
    \item Is our method able to achieve higher constraints satisfaction? \vspace{-0.1cm}
    \item Can external reward help achieve better distillation? \vspace{-0.1cm}
    \item Does the distillation signal help to better reason? 
\end{enumerate}\vspace{-0.2cm}

\paragraph{A. What is the best method in general?}

Figure~\ref{fig:radar_qwen} presents a comprehensive comparison of our constrained RL approach against baseline methods across five key metrics. The results demonstrate that our method achieves the most balanced performance profile, excelling particularly in reasoning quality and constraint satisfaction while maintaining competitive final answer correctness.
The radar plot reveals that pure reward optimization (GRPO $\lambda$=0.0) achieves the highest final answer correctness but at the cost of poor reasoning quality and severe constraint violations. Conversely, methods that focus solely on KL minimization (GKD, Mini-LLM) maintain good constraint satisfaction but suffer from lower final answer correctness. Our constrained RL formulation successfully navigates this trade-off, achieving strong performance across all dimensions.

\begin{figure}[h]
    \centering 
    \includegraphics[width=1.0\textwidth]{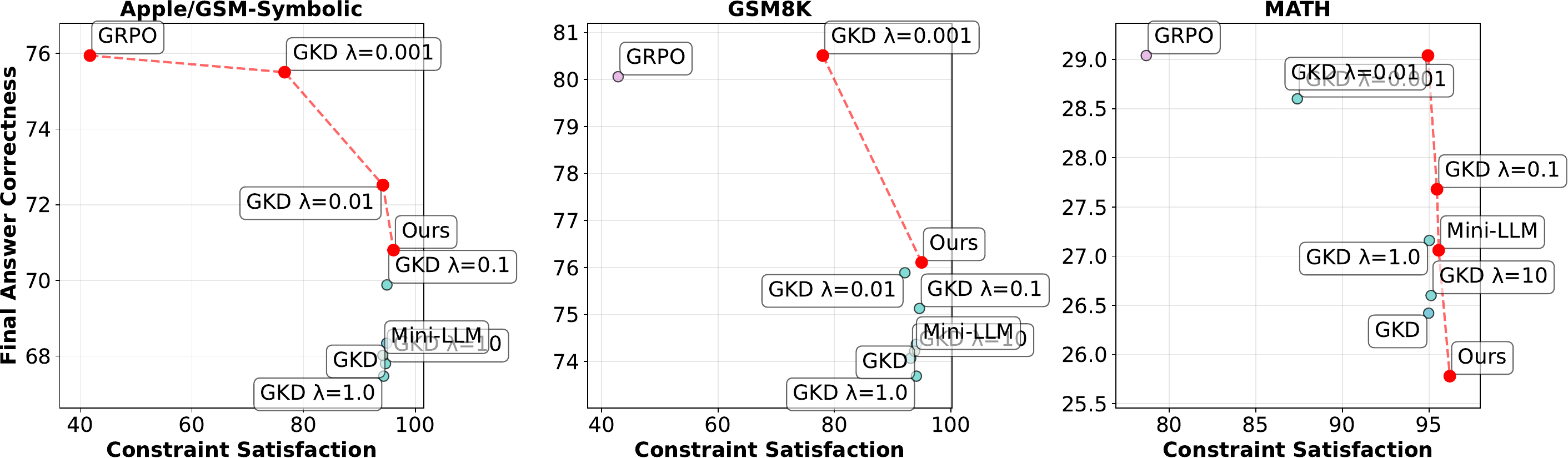}
    \caption{Pareto frontier analysis showing the trade-off between final answer correctness and constraint satisfaction across different methods and hyperparameter settings for Qwen2.5-1.5B-Math. Each point represents a different method configuration. The points in red belong to the Pareto front.}
    \label{fig:pareto_qwen}
\end{figure}
\paragraph{B. Is our method able to achieve higher constraints satisfaction?}

Figure~\ref{fig:pareto_qwen} illustrates the Pareto frontier between final answer correctness and constraint satisfaction across different methods and hyperparameter settings. Our approach consistently achieves superior constraint satisfaction rates while maintaining competitive final answer correctness, occupying a unique region of the Pareto front.
This demonstrates the effectiveness of our constrained formulation in achieving the desired balance between task performance and teacher fidelity.
Note that without introducing $\phi$, our method would have a great difficulty satisfying a strict constraint due to the lack of signal: all trajectories would receive the same penalty $n$ and the training would diverge.

\paragraph{C. Can external reward help achieve better distillation?}

Comparing reward-based methods (GRPO, GKD-GRPO variants, and ours) against purely KL-based methods (GKD, Mini-LLM) reveals the crucial role of external rewards in distillation. Pure KL minimization methods always achieve lower final answer correctness rates in every dataset for each model (Figure~\ref{fig:radar_qwen} and Appendix~\ref{sec:app_exp}).
Beyond the improvement over final answer correctness, we also observe that our method achieves higher reasoning win rates which can also be attributed to the use of the reward function. 
This substantial improvement demonstrates that incorporating task-specific rewards enables the student model to learn more effective reasoning strategies rather than merely mimicking the teacher's surface-level outputs.

\paragraph{D. Does the distillation signal help to better reason?}
\begin{wrapfigure}{r}{0.5\textwidth}
    \centering
    \includegraphics[width=0.5\textwidth]{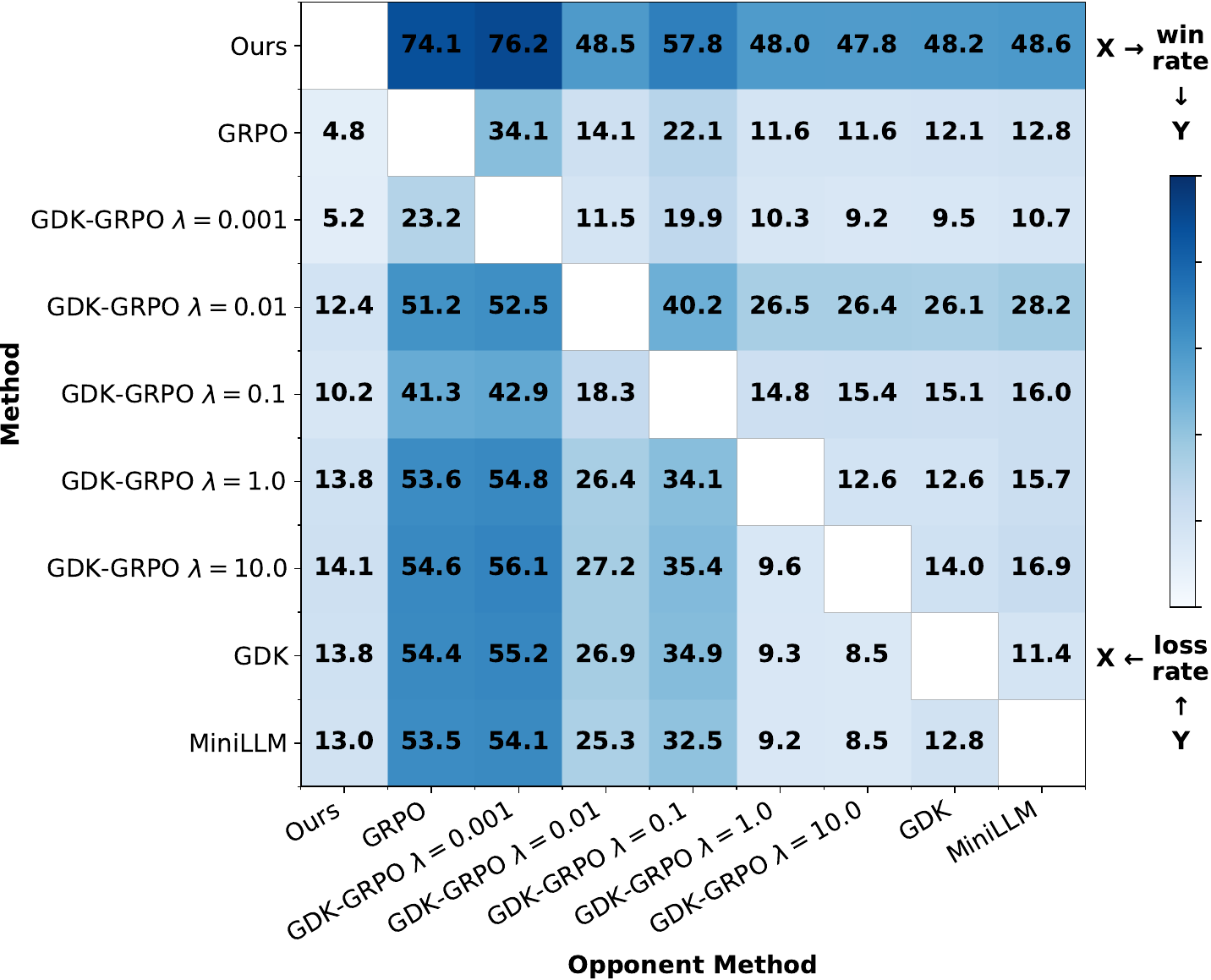}
    \caption{Pairwise comparison heatmap showing the relative performance of our method against baselines, averaged across all three evaluation datasets (Apple/GSM-Symbolic, GSM8K, and MATH) with Qwen2.5-3B-Math. Darker colors indicate superior performance in row-to-column comparisons. Averaging over columns gives the reasoning win rate (RWR) and over rows the reasoning loss rate (RLR).}
    \label{fig:pairwise_qwen}
\end{wrapfigure}
Figure~\ref{fig:pairwise_qwen} presents a comprehensive pairwise comparison matrix averaged across all three evaluation datasets with Qwen.
The comparison between pure reward optimization (GRPO $\lambda$=0.0) and our constrained approach provides strong evidence for the value of teacher guidance in reasoning tasks. While GRPO achieves the highest raw final answer correctness (75-80\%), it exhibits poor reasoning quality with win rates of only 12-19\% and correspondingly high loss rates of 39-55\%.
Our constrained formulation dramatically improves reasoning quality while maintaining competitive success rates. It demonstrates that constraining the student to stay close to the teacher distribution helps preserve and transfer the teacher's reasoning capabilities.
The equivalent figure for Llama3.2-3B is provided in the Appendix~\ref{sec:app_prompts}.

\textit{Qualitative Analysis}:
In Figure~\ref{fig:qwen2_quali1_short}, we present a test set example in which both our method and the GRPO baseline yield the correct final answer. However, only our method produces logically valid reasoning steps, while GRPO's reasoning is flawed.
More examples are provided in the Appendix~\ref{sec:app_prompts}.

These results collectively demonstrate that our constrained RL approach successfully addresses the core challenges of reward-aware distillation: it maintains high constraint satisfaction rates, leverages external rewards for improved task performance, and preserves the teacher's reasoning capabilities in the student model. The method achieves a superior balance across all evaluation dimensions compared to existing approaches that typically excel in only one aspect of the distillation objective.
We provide more detailed results in the Appendix \ref{sec:app_exp}. 

\section{Related Works}
\label{sec:related_work}

\paragraph{Task-specific Distillation.}
The prevailing paradigm in LLM distillation is to pass knowledge from a powerful teacher to a compact student by aligning their output distributions, typically through the \textit{reverse} KL divergence~\citep{hinton2015distilling,sanh2020distilbertdistilledversionbert,minillm,agarwal2024policy,zimmer2025mixture}.
However, this objective does not explicitly guarantee the preservation of the teacher's underlying reasoning abilities on complex tasks~\citep{gudibande2024false}, motivating a shift towards more sophisticated, task-aware techniques. The problem is now increasingly framed through the lens of RL, where adherence to the teacher is elegantly re-conceptualized as a dense, token-level reward derived from the KL divergence. This forms the basis of general-purpose distillation methods~\citep{agarwal2024policy,distillm,distillm2}, which uses a REINFORCE-style update, and Mini-LLM~\citep{minillm}, which decomposes the policy gradient to separate the high-variance, long-term reward from a more stable, single-step objective. This RL framework can then be extended by composing the KL-based reward with an external task reward, $R_{\text{task}}$~\citep{agarwal2024policy}.

\paragraph{Task-aware Extensions.}
Beyond these RL formulations, a significant body of work integrates richer, task-specific signals into the distillation process to provide denser supervision. One prominent strategy, \textit{process-aware} distillation, supervises the student to replicate the teacher's intermediate reasoning steps, thereby transferring the underlying causal logic rather than just the final output~\citep{distill-step-by-step, siked, chen2025step}. Other approaches include \textit{logit-aware} distillation, which intelligently modifies the KL divergence loss to emphasize pivotal, task-relevant tokens identified via attention or Bayesian principles~\citep{li2025bayeskd, li2024mixed, taskd-llm}, and \textit{knowledge-augmented} methods that use retrieval to transfer a teacher's ability to synthesize external information~\citep{kang2023knowledge, 10.1145/3701551.3703577}. While these sophisticated strategies significantly improve signal density, they often introduce new complexities, such as the need for fine-grained annotations, complex weighting heuristics, or the overhead of external knowledge bases.

\paragraph{Constrained RL for LLM Distillation.}
The application of RL to task-specific LLM distillation remains relatively under-explored~\citep{zhang-etal-2025-towards-law}. In standard alignment settings like RLHF~\citep{rlhf}, the KL penalty against a reference model is primarily a regularization tool to prevent catastrophic forgetting and maintain stylistic diversity~\citep{yang2024asymptoticslanguagemodelalignment,stiennon2022learningsummarizehumanfeedback}. However, in the distillation setting, this KL term takes on the dual role of a constraint, intended to preserve the teacher's reasoning capabilities. Most methods still use a fixed penalty, which is simple but can be brittle, as a static weight may not prevent the student from exploiting task rewards via shallow or degenerate reasoning~\citep{gudibande2024false}. To our knowledge, the principled distillation of task-specific, constrained RL policies from LLMs is still scarce, with most related work only examining it briefly ~\citep{agarwal2024policy}.

A more robust alternative is to treat the KL divergence as an explicit trust-region constraint and solve the resulting constrained-RL problem; classic trust-region and constrained-RL methods provide a standard toolkit for this~\citep{pmlr-v37-schulman15, 10.5555/3305381.3305384}. Dual Lagrangian solvers can then adapt the KL penalty to restore an interpretable fidelity–performance point, but at LLM scale, this is practically challenging: teacher forward passes, cached-logit strategies, and inner-loop/dual updates add significant compute, memory, and variance costs~\citep{dasgupta-etal-2023-cost,10.5555/3305381.3305384}. In this work, we address these challenges by reformulating the dual Lagrange problem within a \textit{state-augmented} MDP framework~\citep{10262328,sootla2022enhancing,sootla2022saute}, for which we provide a principled and efficient optimization solution that remains practical at the LLM scale.

\section{Conclusion}
\label{sec:conclusion}

In this work, we moved beyond the conventional paradigm of regularized distillation and introduced a principled framework based on constrained reinforcement learning. By adapting principles from the safe RL literature, we developed a solution that maintains theoretical guarantees of constraint satisfaction without requiring the impractical state augmentation typical of classic methods. This approach successfully navigates the trade-off between task-specific performance and teacher fidelity, eliminating the need for brittle, ad-hoc reward weighting and the prohibitive costs of traditional dual max-min optimization. Our experiments on mathematical reasoning demonstrate that it is possible to enforce a strict KL divergence constraint with high fidelity while maintaining competitive task rewards. This method provides a theoretically grounded and practically efficient pathway for creating smaller, reliable, and specialized models that operate reliably within a defined trust region of their teacher---a crucial step towards more controllable and deployable LLMs.

\clearpage

%
%
%

%
\subsubsection*{Acknowledgments}
We thank Bingning Huang, Shyam Sundhar Ramesh and Xuexing Zhao for the fruitful discussions and insights.

\bibliography{main}
\bibliographystyle{iclr2026_conference}

\clearpage
\appendix

\section{Derivation of Policy Gradient}\label{app:gradient}
We compute the gradient of $J_n({\theta})$ w.r.t $\theta$ following the policy gradient theorem~\cite{sutton1999policy} under the following minimal assumptions:

\textbf{A1.} For each state $\mathbf s$, $\phi_{\pi_\theta}(\mathbf s)$ is finite and differentiable in $\theta$, and its gradient is measurable and integrable along trajectories:
$\mathbb{E}_{\pi_\theta}\!\big[\sum_{t\ge 0}\gamma^t \|\partial_\theta \phi_{\pi_\theta}(\mathbf s_t)\|\big]<\infty$;

\textbf{A2.} There exists an optimal policy $\pi^{*}_{\theta}$ with a finite value such that $\mathbb P\!\Big(d - \sum_{t=0}^{T-1} C_{\pi^{*}_{\theta}}(\mathbf{s}_t)>0 \Big)=1$.

Assumption~A1 ensures that the discrepancy function $\phi_{\pi_\theta}$ and its gradient are well-behaved so that the explicit-dependence term (II) in~\eqref{eq:grad-split} is finite and integrable to guarantee that the policy-gradient estimator has bounded variance. This assumption can be satisfied by many discrepancy functions, in our implementation, we choose $\phi$ as the KL divergence $\phi_{\pi_\theta}(\mathbf s)=\mathrm{KL}(\pi_\theta(\cdot\mid \mathbf s)\Vert \mu(\cdot\mid \mathbf s))$, 
whose gradient admits the standard score-function $\partial_\theta \phi_{\pi_\theta}(\mathbf s)
= \mathbb{E}_{a \sim \pi_\theta(\cdot \mid \mathbf s)}\!\Big[
\nabla_\theta \log \pi_\theta(a \mid \mathbf s)\,\big(1 + \log \pi_\theta(a \mid \mathbf s) - \log \mu(a \mid \mathbf s)\big)
\Big]$. By enforcing overlapping support between $\pi_\theta$ and $\mu$ in implementation (e.g., using a probability floor), we guarantee that $\phi_{\pi_\theta}$ remains finite and that $\partial_\theta \phi_{\pi_\theta}$ is bounded across all states, thereby satisfying assumption~A1. 

Assumption~A2 requires that the optimal policy $\pi_\theta^*$ exists inside the feasible set, which implies that the budget constraint is almost surely satisfied and no probability mass is concentrated on the boundary. This assumption is mild in practice, since by choosing a sufficiently large penalty parameter $n$ we can always discourage boundary-violating policies and guarantee the existence of a feasible optimum. 

Under assumptions~A1–A2, we can characterize the explicit-dependence term (II) in a unified way:

1) On strictly feasible trajectories, i.e., when $d-\sum_{u=0}^{t-1} C_{\pi_\theta}(\mathbf s_u)>0$, the feasibility indicator is locally constant in a neighborhood of $\pi_\theta^*$, so $\partial_\theta \hat R_{\pi_\theta,n}(\mathbf s_t,\mathbf a_t)=0$ at every step and term~(II) vanishes.

2) When a trajectory has already violated the budget, the reward switches to the penalized branch, therefore in the infeasible region term~(II) reduces to $\partial_\theta \hat R_{\pi_\theta,n}(\mathbf s_t,\mathbf a_t)=-\,\partial_\theta \phi_{\pi_\theta}(\mathbf s_t)$.

3) At the boundary, where the cumulative constraint exactly equals $d$, the reward becomes non-differentiable. We replace the derivative with a generalized subgradient, following prior RL works with non-smooth objectives~\cite{zhang2020stability,wang2022policy, kumar2023policy}. We adopt the Mordukhovich subgradient following the definition from~\cite{mordukhovich2018variational}, and the term (II) reduces to $-\,\mathbbm{1}\!\big\{d-\sum_{u=0}^{t-1} C_{\pi_\theta}(\mathbf s_u)\le \varepsilon\big\}\,\partial_\theta \phi_{\pi_\theta}(\mathbf s_t)$ by taking the limiting subgradient from the infeasible side with a small tolerance $\varepsilon \downarrow 0$ during training.

We note that in practice, the probability of hitting the boundary exactly is small in the continuous setting of the constraint value, and term~(II), through its explicit single-step decomposition, also contributes to variance reduction during training, as observed in prior works~\cite{czarnecki2019distilling,minillm}. As a result, term~(II) disappears on feasible trajectories near the optimum, while continuing to provide informative signals both for trajectories that violate the constraint and for those approaching the boundary.

Therefore, our final gradient for optimization is
\[
\begin{aligned}
\nabla_\theta J_n(\theta)
&=\mathbb{E}_{\pi_\theta}\!\Bigg[\sum_{t \ge 0} 
\nabla_\theta \log \pi_\theta(\mathbf a_t \mid \mathbf s_t)\,
\Big(\sum_{u \ge t} \gamma^{u-t}\, \hat{R}_{\pi_\theta, n}(\mathbf s_u, \mathbf a_u)\Big)\Bigg] \\[4pt]
&\quad-\;
\mathbb{E}_{\pi_\theta}\!\Bigg[\sum_{t \ge 0}\gamma^t \,
\mathbbm{1}\!\left\{\,d-\sum_{u=0}^{t-1}C_{\pi_\theta}(\mathbf s_u) \le \varepsilon\,\right\}
\,\partial_\theta \phi_{\pi_\theta}(\mathbf s_{t})\Bigg]
\end{aligned}
\]

\section{Proofs of Constraint Satisfaction Guarantee}\label{app:theo}
\begin{theorem}[Optimal equivalence]
For every feasible state $\mathbf{s}_T$, the optimal value functions of the unaugmented MDP $\widehat{\mathcal{M}}_d^n$ in \eqref{eq:our_prob} and the augmented MDP $\widetilde{\mathcal{M}}_d^n$ in \eqref{eq:saute_unconstrained} are equivalent:
\[
\hat{V}^{*}(\mathbf{s}_T) = \tilde{V}^{*}(\mathbf{s}_T,\mathbf{z}_T).
\]
\end{theorem}

\begin{proof}
Given the budget recursion $\mathbf{z}_T = d - \sum_{t=0}^{T-1} C_{\pi}(\mathbf{s}_t)$ and the fact that $\mathbf{s}_T$ encodes the whole past and $C_{\pi}$ is deterministic in $\mathbf{s}$ given a fixed teacher policy $\mu$ and a student policy $\pi$, $\mathbf{z}_T$ is a deterministic function of any reachable $\mathbf{s}_T$ for any predefined budget $d$. Therefore, the step-wise rewards in the feasible set are equivalent in $\widetilde{\mathcal{M}}_d^n$ and $\widehat{\mathcal{M}}_d^n$, $\tilde{R}_n(\mathbf{s}_T,\mathbf{z}_T,\mathbf{a}_T)=\hat{R}_{\pi, n}(\mathbf{s}_T,\mathbf{a}_T)$ for every reachable time $T$ along any feasible trajectories by the definitions in \eqref{eq:saute_unconstrained} and \eqref{eq:our_prob}.

The $\mathbf s$-marginal transition kernel is identical in both formulations $\mathbf s_{T+1}\sim \mathcal{P}_{\mathcal{S}}(\cdot\mid \mathbf s_T,\mathbf a_T)$, and the budget update is deterministic $\mathbf z_{T+1}=\mathbf z_T-C_\pi(\mathbf s_T)$ in the augmented model $\widetilde{\mathcal{M}}_d^n$. Define the projected policy on the reachable set by $\bar\pi(\mathbf a\mid \mathbf s):=\pi(\mathbf a\mid \mathbf s, \mathbf z(\mathbf s))$, where $\mathbf z(\mathbf s)$ denotes the reconstructed budget associated with $\mathbf s$. Then the action distribution under $\bar\pi$ at $\mathbf s$ equals that under $\pi$ at $(\mathbf s,\mathbf z(\mathbf s))$. Therefore, the induced $(\mathbf s,\mathbf a)$-trajectory laws coincide, and together with the step-wise reward equality we obtain the policy-wise identity $\hat{V}^{\bar\pi}_n(\mathbf{s}_T) = \tilde{V}^{\pi}_n(\mathbf{s}_T,\mathbf{z}_T)$.

Conversely, for any un-augmented policy $\bar\pi(\mathbf a\mid \mathbf s)$ define the lifted policy $\pi^\uparrow(\mathbf a\mid \mathbf s,\mathbf z):=\bar\pi(\mathbf a\mid \mathbf s)$. This yields $\tilde{V}_n^{\pi^\uparrow}(\mathbf s,\mathbf z)=\hat V_n^{\bar\pi}(\mathbf s)$ on the reachable set, so the suprema over the two policy classes agree there; hence $\hat{V}^{*}_n(\mathbf{s}_T) = V^{*}_n(\mathbf{s}_T,\mathbf{z}_T)$.
\end{proof}

We adopt the following standard assumptions~\cite{hernandez1992discrete,sootla2022saute} for the discrete token setting in distillation:

\textbf{B1.} The reward function $\hat{R}_n(\mathbf{s}_T, \mathbf{a}_T)$ is bounded, measurable, and upper semicontinuous on $\mathcal{S}\times\mathcal{A}$;

\textbf{B2.} The transition kernel $\mathcal{P}$ is weakly continuous on $\mathcal{S}\times\mathcal{A}$; \quad
\textbf{B3.} The action space $\mathcal{A}$ is compact.

\begin{theorem}[Bellman optimality and value convergence]
Consider the unaugmented MDP $\widehat{\mathcal{M}}_d$, satisfying assumption B1-B3 with the associated~\eqref{eq:our_prob}, then:

a) the Bellman equation is satisfied in $\widehat{\mathcal{M}}_d$; 

b) the optimal value function $\hat{V}^{*}_{n}$ for $\widehat{\mathcal{M}}^n_d$ converges monotonically to $\hat{V}^*_\infty$ for $\widehat{\mathcal{M}}^{\infty}_d$.
\end{theorem}

\begin{proof}
For \textbf{B1}, the task reward in our setting is bounded and measurable on feasible steps, $0\le R(\mathbf s,\mathbf a)\le R_{\max}$, and the discrepancy on infeasible steps is also bounded and measurable, $0\le \phi_\pi(\mathbf s)\le \Phi_{\max}$. On the discrete token state–action space $(\mathcal S\times\mathcal A)$, every real-valued function is continuous and hence also upper semicontinuous. Since each point is isolated, any sequence $(\mathbf s_k,\mathbf a_k)\to(\mathbf s,\mathbf a)$ is eventually constant, so 
$\limsup_{(\mathbf s',\mathbf a')\to(\mathbf s,\mathbf a)} \hat R_n(\mathbf s',\mathbf a')=\hat R_n(\mathbf s,\mathbf a)$,
which establishes B1. 

For \textbf{B2}, note that for any bounded function $g:\mathcal S\to\mathbb R$, the map 
$(\mathbf s,\mathbf a)\mapsto \sum_{\mathbf s'} \mathcal P(\mathbf s' \mid \mathbf s,\mathbf a)\, g(\mathbf s')$ 
is continuous since the domain is discrete, which implies the usual weak continuity condition holds in this setting.

For \textbf{B3}, the action set $\mathcal A$ is a finite token space, hence compact.

a) Under B1–B3, standard dynamic programming results ensure the existence of an optimal value function satisfying the Bellman equation for $\widehat{\mathcal M}_d^n$ by using Theorem~4.2 in~\cite{hernandez1992discrete}, applied here to the discrete setting.

b) The penalty on infeasible steps becomes harsher with $n$ while using the same discrepancy function $\phi_{\pi}$. Let $m>n$, then on infeasible steps $\hat R_m\le \hat R_n$. Hence $\hat V_m^\pi(\mathbf s)\le \hat V_n^\pi(\mathbf s)$ for any policy $\pi$ and state $\mathbf s$, and taking $\sup_\pi$ yields $\hat V_m^*(\mathbf s)\le \hat V_n^*(\mathbf s)$. Therefore, the optimal values $\hat V_n^{*}$ converge monotonically to $\hat V_\infty^{*}$ as $n\to\infty$.
\end{proof}

\begin{theorem}[Almost surely constraint satisfaction]
If there exists an optimal policy $\pi^*$ solving $\widehat{\mathcal{M}}^\infty_d$ with a finite value, then $\pi^*$ is also an optimal policy for the original constrained MDP $\mathcal{M}_d$ and satisfies the constraint almost surely.
\end{theorem}

\begin{proof}
In $\widehat{\mathcal{M}}^\infty_d$, any trajectory that ever violates the budget receives $-\infty$ return; therefore a finite value under $\pi^*$ implies $\mathbb{P}_{\pi^*}\!\left(\sum_{t=0}^\infty C_{\pi^*}(\mathbf{s}_t)\le d\right)=1$,
i.e., the constraint holds almost surely. On the feasible set, where the budget is never violated, the step-wise rewards in $\widehat{\mathcal{M}}^\infty_d$ and $\mathcal{M}_d$ coincide, so the objectives coincide. Since $\pi^*$ maximizes the objective in $\widehat{\mathcal{M}}^\infty_d$ and is feasible almost surely, it also maximizes the objective in $\mathcal{M}_d$ and satisfies the constraint almost surely.
\end{proof}

\section{A perspective of LLM distillation as Contextual MDPs}\label{app:context-mdp}
We formalized LLM distillation as a standard MDP in this work, given that the student $\pi_\theta$ is frozen within each episode and the teacher $\mu$ is fixed during distillation, so the induced control process is time-homogeneous. This is the standard formulation used in prior RL for LLM distillation works~\citep{minillm, czarnecki2019distilling} and supports standard convergence/optimality analysis. Here we note an equivalent viewpoint that treats each episode under a fixed \emph{context} $c$ (e.g., a policy checkpoint), giving a \emph{Contextual MDP} that is optimality equivalent to the standard MDP formulation.

\begin{definition}[Contextual MDP for LLM Distillation]
The contextual MDP $\mathcal M^{\mathrm{ctx}}_d$ is a tuple $(\mathcal C,\ \mathcal S,\ \mathcal A,\ P,\ R^{\mathrm{ctx}}_{n},\ \gamma)$,
where $\mathcal C$ is the context space, with $c\in\mathcal C$ fixed during an episode, the contextual reward $R^{\mathrm{ctx}}_{n}:\mathcal S\times\mathcal A\times\mathcal C\to\mathbb R$ is
\[
R^{\mathrm{ctx}}_{n}(s,a;c)=
\begin{cases}
R(s,a), & \text{if } d-\displaystyle\sum_{t=0}^{T-1} C(s_t,c)\ \ge 0,\\[3pt]
-\big(n+\phi(s,c)\big), & \text{otherwise}.
\end{cases}
\]
with $C(\cdot,c)$ the per-step constraint at context $c$ and $\phi(s,c)$ any f-divergence (e.g., $\phi(s,c)=\mathrm{KL}\big(\pi_c(\cdot\mid s)\Vert\mu(\cdot\mid s)\big)$). A \emph{contextual policy} is a Markov kernel $\pi(\cdot\mid s,c)$ on $\mathcal A$.

\end{definition}

For any fixed $c$, the slice of $\mathcal M^{\mathrm{ctx}}_d$ at that context induces the per-episode stationary problem used in $\widehat{\mathcal M}_d$, with per-context reward $\hat R_{\pi_c,n}(s,a):=R^{\mathrm{ctx}}_{n}(s,a;c)$ and per-context policy $\pi_c(\cdot\mid s):=\pi(\cdot\mid s,c)$.

\begin{proposition}

For every contextual policy $\pi(\cdot\mid s,c)$, there is a corresponding per-context policy $\pi_c(\cdot\mid s)=\pi(\cdot\mid s,c)$ such that
\[
V^{\pi}(s,c)=\hat V^{\pi_c}(s).
\]
Conversely, for every per-context policy $\pi_c(\cdot\mid s)$ there is a contextual policy $\pi(\cdot\mid s,c)=\pi_c(\cdot\mid s)$ with the same return. Consequently,
\[
\sup_{\pi} V^{\pi}(s,c)=\sup_{\pi_c}\hat V^{\pi_c}(s),
\]
and optimal contextual policies and optimal per-context policies coincide on the reachable set.
\end{proposition}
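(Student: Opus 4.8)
The plan is to prove this proposition as an essentially bookkeeping identity: the contextual MDP and the per-context family of MDPs are the ``same'' object viewed from two notational angles, so the work is to make the correspondence between contextual policies and per-context policies precise and then verify that returns are preserved termwise. First I would fix an arbitrary context $c \in \mathcal{C}$ and observe that, since $c$ is held constant throughout an episode, the dynamics $P$, the action set $\mathcal{A}$, and the reward $R^{\mathrm{ctx}}_{n}(\cdot,\cdot;c)$ restricted to that fixed $c$ coincide exactly with the ingredients of $\widehat{\mathcal{M}}_d^n$ when we set $\hat R_{\pi_c,n}(s,a) := R^{\mathrm{ctx}}_{n}(s,a;c)$ and $\pi_c(\cdot\mid s) := \pi(\cdot\mid s,c)$; this identification is already stated in the excerpt immediately preceding the proposition, so I would cite it rather than re-derive it.

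Next I would establish the forward direction. Given a contextual policy $\pi(\cdot\mid s,c)$, define $\pi_c(\cdot\mid s) := \pi(\cdot\mid s,c)$. I would then argue that the trajectory law under $\pi$ started at $(s,c)$ projected onto the $(\mathbf{s}_t,\mathbf{a}_t)$ coordinates is identical to the trajectory law of $\widehat{\mathcal{M}}_d^n$ under $\pi_c$ started at $s$: the initial state agrees, the action distribution at every visited state agrees by construction, and the transition kernel $P$ does not depend on $c$, so an induction on $t$ (or an appeal to the Ionescu–Tulcea construction) gives equality of the full path measures. Since the per-step rewards also agree — $R^{\mathrm{ctx}}_{n}(\mathbf{s}_t,\mathbf{a}_t;c) = \hat R_{\pi_c,n}(\mathbf{s}_t,\mathbf{a}_t)$, using that the feasibility predicate $d - \sum_{u<t} C(\mathbf{s}_u,c) \ge 0$ is exactly the per-context budget condition — the discounted returns coincide, yielding $V^{\pi}(s,c) = \hat V^{\pi_c}(s)$. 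The converse direction is symmetric: given $\pi_c$, set $\pi(\cdot\mid s,c) := \pi_c(\cdot\mid s)$ (this is well-defined because $c$ ranges over a fixed value within an episode) and run the same trajectory-coupling argument in reverse.

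Finally, taking suprema over the respective policy classes on both sides of $V^{\pi}(s,c) = \hat V^{\pi_c}(s)$ — and noting that the two constructions above are mutually inverse bijections between the class of contextual policies and the class of per-context policies at fixed $c$ — gives $\sup_{\pi} V^{\pi}(s,c) = \sup_{\pi_c} \hat V^{\pi_c}(s)$, and any maximizer on one side maps to a maximizer on the other, so optimal policies coincide on the reachable set. I do not expect a genuine obstacle here; the only point requiring a little care is making the measure-theoretic trajectory-coupling argument clean when $\mathcal{S}$ is the (countable) token-history space — I would handle this by the standard Ionescu–Tulcea extension, noting that since $c$ is frozen the added context coordinate is a constant process and contributes nothing to the dynamics, so no measurability subtlety beyond what is already assumed in B1–B3 arises.
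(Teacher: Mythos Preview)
Your proposal is correct and follows essentially the same approach as the paper's proof sketch: both fix $c$, argue that the $(\mathbf s,\mathbf a)$-trajectory laws and stepwise rewards coincide under the slice/lifting correspondence, and then pass to suprema. The only cosmetic difference is that the paper frames the construction as an \emph{annotated MDP} in the sense of Bacchus et al.\ and cites their projection/lifting results (Prop.~4.3 and Cor.~4.4) for the optimality equivalence, whereas you carry out the trajectory-coupling and supremum step directly via Ionescu--Tulcea and the explicit bijection; the underlying argument is the same.
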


\begin{proof}[Proof sketch]
This contextualization with fixed $c$ is an annotated  MDP in the sense of~\cite[Def.~4.1]{bacchus1996rewarding}, with extended states $(s,c)$ and stepwise rewards $R^{\mathrm{ctx}}_n(s,a;c)$. For any $\pi(\cdot\mid s,c)$, the $(s,a)$-trajectory law under $\mathcal M^{\mathrm{ctx}}_d$ coincides with that under the per-context policy $\pi_c(\cdot\mid s)$ in $\widehat{\mathcal M}_d$; moreover the stepwise rewards agree by construction $R^{\mathrm{ctx}}_n(s,a;c)=\hat R_{\pi_c,n}(s,a)$ at the fixed context. Hence $V^{\pi}(s,c)=\hat V^{\pi_c}(s)$ on the reachable set. The projection/lifting correspondence for annotated expansions (cf.\ \cite[Prop.~4.3 and Cor.~4.4]{bacchus1996rewarding}) then yields equality of suprema and optimal policies on the reachable set.
\end{proof}

This formulation keeps $c$ as an explicit input to the reward while remaining per-episode stationary because $c$ is fixed within an episode. It is thus a notationally different but also equivalent way to present the same optimization problem as in the standard MDP.

\section{Algorithm and Implementation}
\label{sec:app_algo}


\subsection{Reward Function Design}
For mathematical reasoning tasks, we use binary rewards based on final answer correctness:
\begin{equation}
R(s_T, a_T) = \begin{cases}
1.0 & \text{if final answer is correct} \\
0.0 & \text{if final answer is incorrect}
\end{cases}
\end{equation}

The reward is only assigned at the final step of each trajectory when the complete solution is generated. This sparse reward structure is typical for mathematical reasoning tasks where intermediate steps cannot be easily evaluated without domain expertise.

\subsection{KL Divergence Computation}
The KL divergence between student and teacher policies is computed at each time step as:
\begin{equation}
\text{KL}(\pi_\theta(\cdot|s_t) || \mu(\cdot|s_t)) = \sum_{a \in \mathcal{V}} \pi_\theta(a|s_t) \log \frac{\pi_\theta(a|s_t)}{\mu(a|s_t)}
\end{equation}
where $\mathcal{V}$ is the LLM vocabulary. 

\subsection{Hyperparameter Settings}

We used the following hyperparameters for all the method:
\begin{itemize}
    \item Batch size: 64 responses (8 questions $\times$ 8 responses per question)
    \item Learning rate: $1^{e-5}$
    \item Optimizer: AdamW
    \item Discount factor $\gamma=1$
    \item Constraint threshold $d=0.35$. 
    The constraint threshold was selected based on preliminary experiments that seek to minimize only the KL (mini-LLM and GKD).
    \item Number of training epochs: 20
    \item Penalty $n$: 20
\end{itemize}

The training of Llama3.2-3B with GRPO was unstable due to its very poor initial performance; therefore, to bootstrap all methods, we apply KL distillation alone for the first 3 epochs (even with GRPO $\lambda = 0$).

\subsection{Training time}
The training takes less than 2 days on a single accelerator for each method. Overall, all the methods need the same amount of training time. GRPO is only a bit faster because the teacher is not used, but backward phases and generation time dominate the overall training time.

\section{More experiments results}
\label{sec:app_exp}

\begin{table}[h]
\centering
\caption{Distillation results of Qwen2.5-1B on GSM8K. Higher final answer correctness (FAC), reasoning win rate (RWR) and constraint satisfaction (CS) are better, while lower KL divergence and lower reasoning loose rate (RLR) are better.}
\setlength{\tabcolsep}{3pt}
\label{tab:qwen_results}
\resizebox{\textwidth}{!}{
\begin{tabular}{l @{\hspace{15pt}} ccccc @{\hspace{25pt}} ccccc @{\hspace{25pt}}ccccc}
\toprule
Method & \multicolumn{5}{c}{Apple/GSM-Symbolic} & \multicolumn{5}{c}{GSM8K} & \multicolumn{5}{c}{MATH} \\
& FAC $\uparrow$ & RWR $\uparrow$ & RLR $\downarrow$ & KL $\downarrow$ & CS $\uparrow$ & FAC $\uparrow$ & RWR $\uparrow$ & RLR $\downarrow$ & KL $\downarrow$ & CS $\uparrow$ & FAC $\uparrow$ & RWR $\uparrow$ & RLR $\downarrow$ & KL $\downarrow$ & CS $\uparrow$ \\
\midrule
Ours & 70.80 & \bf 60.55 & \bf 10.58 & \bf 0.16 ($\pm$ 0.17) & \bf 96.1 & 76.11 & \bf 58.72 & \bf 7.86 & 0.15 ($\pm 0.19$) & \bf 94.99 & 25.78 & \bf 41.65 & \bf 14.44 & 0.15 ($\pm$ 0.17) & \bf 96.2 \\
\midrule
 GRPO $\lambda = 0.0$ & $\bf 75.94$ & 14.89 & 53.58 & 0.41 ($\pm$ 0.28) & 41.74 & \bf 80.06 & 12.15 & 54.67 & 0.41 ($\pm$ 0.29) & 42.83 & \bf 29.04 & 19.49 & 39.62 & 0.27 ($\pm 0.19$) & 78.68\\
 GKD-GRPO $\lambda = 0.001$ & 75.50 & 10.64 & 57.88 & 0.29 ($\pm 0.23$) & 76.6 & 80.51 & 10.94 & 55.71 & 0.28 ($\pm 0.17$) & 78.01 & 28.60 & 18.5 & 38.73 & 0.23 ($\pm 0.17$) & 87.40\\
 GKD-GRPO $\lambda = 0.01$ & 72.52 & 34.87 & 25.27 & 0.18 ($\pm 0.25$) & 94.2 & 75.89 & 34.52 & 23.76 & 0.18 ($\pm$ 0.23) & 92.11 & \bf 29.04 & 26.55 & 24.23 & 0.15 ($\pm 0.14$) & 94.94\\
 GKD-GRPO $\lambda = 0.1$ & 69.88 & 22.34 & 36.04 & 0.16 ($\pm 0.23$) & 94.92 & 75.13 & 20.82 & 35.36 & \bf{0.14 ($\pm$ 0.20)} & 94.61 & 27.68 & 20.86 & 30.23 & \bf{0.14 ($\pm 0.15$)} & 95.46\\
 GKD-GRPO $\lambda = 1.0$ & 67.47 & 29.12 & 17.74 & 0.17 ($\pm 0.29$) & 94.34 & 73.69 & 29.01 & 16.37 & 0.16 ($\pm$ 0.32) & 94.08 & 27.16 & 24.04 & 17.37 & 0.15 ($\pm 0.21$) & 95.02 \\
 GKD-GRPO $\lambda = 10$ & 67.8 & 30.01 & 17.59 & 0.16 ($\pm 0.25$) & 94.66 & 74.07 & 28.94 & 16.73 & 0.15 ($\pm$ 0.23) & 93.1 & 26.6 & 24.01 & 17.82 & 0.15 ($\pm 0.18$) & 95.12\\
GKD & 68.34 & 28.1 & 19.24 & 0.16 ($\pm 0.25$) & 94.88 & 74.37 & 27.03 & 18.18 & 0.15 ($\pm$ 0.23) & 94.08 & 26.42 & 23.12 & 18.07 & 0.15 ($\pm 0.17$) & 94.98\\
Mini-LLM & 68.02 & 27.65 & 20.24 & 0.16 ($\pm 0.28$) & 94.2 & 74.22 & 26.20 & 19.68 & 0.15 ($\pm$ 0.26) & 93.78 & 27.06 & 22.01 & 19.71 & 0.15 ($\pm 0.21$) & 95.56\\
\midrule
Student model & 0 & & & 2.08 ($\pm 1.89$) & 0.14 & 0.22 & & & 1.96 ($\pm 1.82$) & 0.45 & 0.54 & & & 2.47 ($\pm 2.09$) & 3.4\\
Teacher model & 88.12 & & &  & & 92.27 & & &  & & 34.46 \\
\bottomrule
\end{tabular}
}
\end{table}

\begin{table}[h]
\centering
\caption{Distillation results of Llama3.2-3B on MATH. Higher success rates (SR) and constraint satisfaction (CS) are better, while lower KL divergence is better.}
\setlength{\tabcolsep}{3pt}
\label{tab:llama_results}
\resizebox{\textwidth}{!}{
\begin{tabular}{l @{\hspace{15pt}} ccccc @{\hspace{25pt}} ccccc @{\hspace{25pt}}ccccc}
\toprule
Method & \multicolumn{5}{c}{Apple/GSM-Symbolic} & \multicolumn{5}{c}{GSM8K} & \multicolumn{5}{c}{MATH} \\
& FAC $\uparrow$ & RWR $\uparrow$ & RLR $\downarrow$ & KL $\downarrow$ & CS $\uparrow$ & FAC $\uparrow$ & RWR $\uparrow$ & RLR $\downarrow$ & KL $\downarrow$ & CS $\uparrow$ & FAC $\uparrow$ & RWR $\uparrow$ & RLR $\downarrow$ & KL $\downarrow$ & CS $\uparrow$ \\
\midrule
Ours &  36.78 & \bf 42.33 & \bf 21.44 & 0.22 ($\pm 0.07$) & \bf 94.64 & 38.36 & \bf 51.76 & \bf 19.78 & 0.21 ($\pm 0.07$) & \bf 99.60 & 17.10 & \bf 34.40 & \bf 23.58 & 0.15 ($\pm 0.06$) & $\bf 99.48$  \\
\midrule
 GRPO $\lambda = 0.0$ & \bf 42.48 & 33.82 & 39.12 & 0.71 ($\pm 0.15$)& 0.16 & 49.73 & 21.30 & 57.14 & 0.73 ($\pm 0.15$) & 0.3 & \bf 18.90 & 25.44 & 47.96 & 0.64 ($\pm 0.2$) & 8.08\\
 GKD-GRPO $\lambda = 0.001$ & 40.20 & 38.42 & 32.37 & 0.49 ($\pm 0.12$) & 14.56 & \bf 53.44 & 37.18 & 34.36 & 0.5 ($\pm 0.13$) & 12.81 & 18.52 & 33.87 & 34.65 & 0.39 ($\pm 0.14$) & 38.98\\
 GKD-GRPO $\lambda = 0.01$ & 40.22 & 23.81 & 33.77 & 0.29 ($\pm 0.09$) & 72.86 & 52.53 & 43.60 & 28.22 & 0.28 ($\pm 0.09$) & 80.89 & 17.62 & 29.21 & 29.32 & 0.21 ($\pm 0.08$) & 93.52\\
 GKD-GRPO $\lambda = 0.1$ & 42.28 & 27.21 & 27.65 & 0.23 ($\pm 0.08$) & 90.56 & 53.37 & 32.74 & 35.67 & 0.23 ($\pm 0.08$) & 92.57 & 17.48  & 30.07 & 25.38 & 0.16 ($\pm 0.07$) & 98.20\\
 GKD-GRPO $\lambda = 1.0$ & 38.02 & 24.31 & 30.63 & \bf 0.21 ($\pm 0.08$) & 94.18 & 42.45 & 31.99 & 35.24 & 0.21 ($\pm 0.07$) & 95.98 & 17.80 & 27.38 & 29.90 & \bf 0.14 ($\pm 0.06$) & 99.22\\
 GKD-GRPO $\lambda = 10$ & 37.92 & 26.16 & 28.10 & \bf 0.21 ($\pm 0.08$) & 94.5 & 38.66 & 30.08 & 36.90 & \bf 0.20 ($\pm 0.07$) & 95.60 & 18.42 & 30.17 & 28.18 & \bf 0.14 ($\pm 0.06$) & 99.46\\
GKD & 36.88 & 26.87 & 27.71 & \bf 0.21 ($\pm 0.08$) & 94.7 & 38.36 & 41.74 & 28.42 & \bf 0.20 ($\pm 0.07$) & 95.98 & 17.80 & 29.80 & 25.66 & \bf 0.14 ($\pm 0.06$) & 99.24 \\
Mini-LLM & 37.34 & 26.46 & 28.66 & \bf 0.21 ($\pm 0.08$)  & 94.38 & 39.25 & 26.12 & 40.73 & 0.21 ($\pm 0.07$) & 95.52 & 15.88 & 30.27 & 26.22 & \bf 0.14 ($\pm 0.06$) & 99.20 \\
\midrule
Student model & 0 & & &  0.73 ($\pm 0.88$) & & 2.75 & & & 1.45 ($\pm 1.57$) & & 0.08 &  & &1.02 ($\pm 0.9$) \\
Teacher model & 51.86 & & & & & 54.58 & & & & & 32.08 \\
\bottomrule
\end{tabular}
}
\end{table}

\begin{figure}[htbp]
    \centering
    \includegraphics[width=0.5\textwidth]{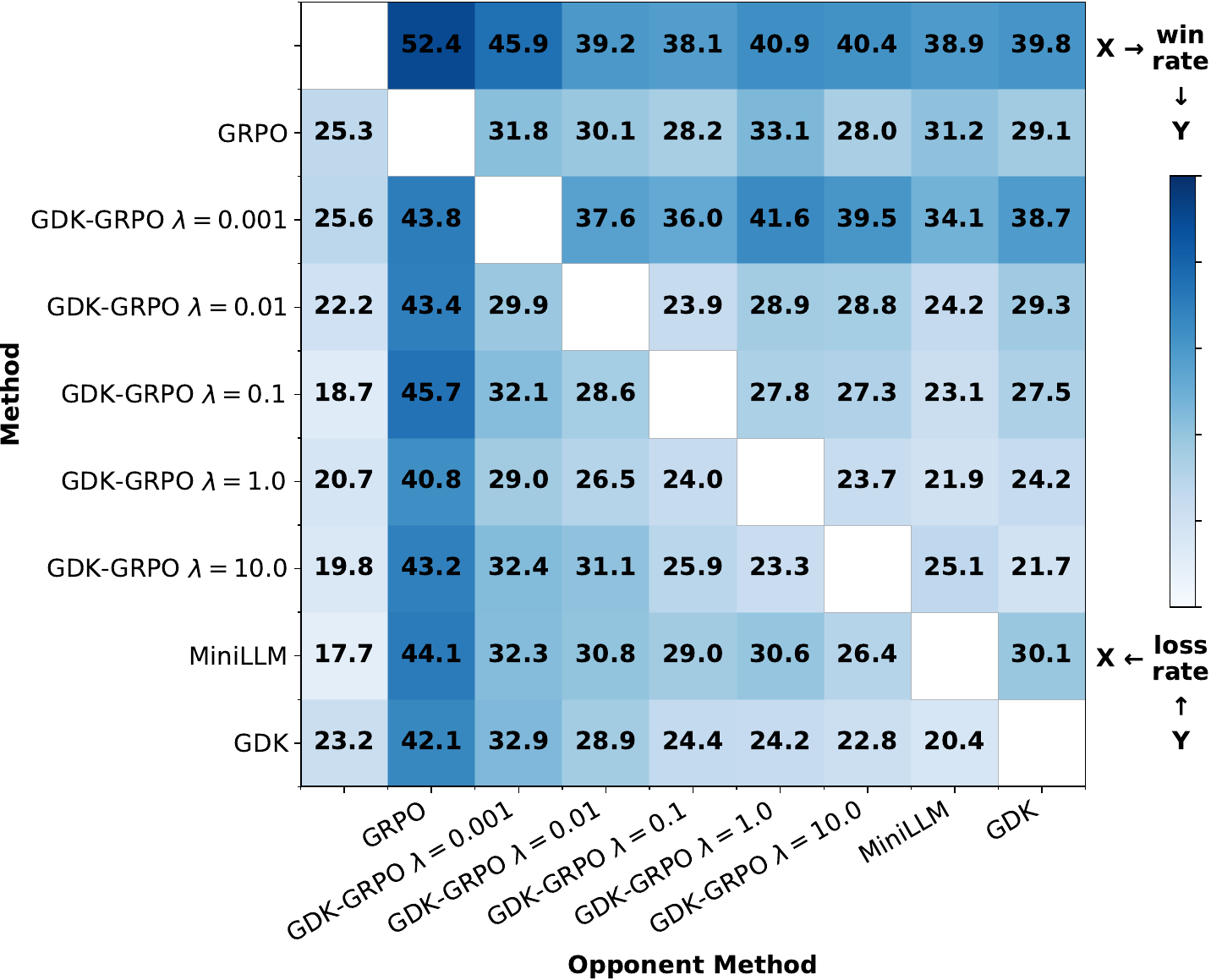}
    \caption{Pairwise comparison heatmap showing the relative performance of our method against baselines, averaged across all three evaluation datasets (Apple/GSM-Symbolic, GSM8K, and MATH) on Llama3.2-3B. Averaging over columns gives the reasoning win rate (RWR) and over rows the reasoning loss rate (RLR).}
    \label{fig:pairwise_llama}
\end{figure}

\clearpage
\section{Generated Answers}
\label{sec:app_prompts}

\begin{figure}[htb]
    \centering 
    \begin{tikzpicture}[
        boxa/.style={
            draw,
            rectangle,
            rounded corners,
            fill=green!5,
            text width=0.48\linewidth,
            minimum height=10.6cm,
            align=left,
            font=\scriptsize,
            text height=8.4cm,
            execute at begin node={\vbox\bgroup\raggedright},
            execute at end node={\vfil\egroup},
        },
        boxb/.style={
            draw,
            rectangle,
            rounded corners,
            fill=red!5,
            text width=0.48\linewidth,
            align=left,
            font=\scriptsize,
        },
        box_hoz/.style={
            draw,
            rectangle,
            rounded corners,
            fill=blue!5,
            text width=0.98\linewidth,
            minimum height=1cm,
            align=left,
            font=\scriptsize,
            inner sep=4pt
        },
        node distance=0.1cm
    ]
        
    \node (box2) [boxa] {
        \textbf{Ours}:\\[0.2cm]
        To determine what percentage of the whale's body length the combined length of the remoras is, we need to follow these steps:\par
        \textbf{1. Convert the length of the remoras from inches to feet:}
        \[ 45 \text{ inches} \times \frac{1 \text{ foot}}{12 \text{ inches}} = 3.75 \text{ feet} \]
        So, each remora is 3.75 feet long.\par
        \textbf{2. Calculate the total length of the remoras:}
        \[
           4 \text{ remoras} \times 3.75 \text{ feet per remora} = 15 \text{ feet}
        \]
        \textbf{3. Determine the total length of the whale:}
        \[
           300 \text{ feet}
        \]\par
        \textbf{4. Calculate the percentage of the whale's body length that the combined length of the remoras represents:}
        \[
           \left( \frac{15 \text{ feet}}{300 \text{ feet}} \right) \times 100\%
        \]
        \textbf{5. Simplify the fraction:}
        \[
           \frac{15}{300} = 0.05
        \]\par
        \textbf{6. Convert the decimal to a percentage:}
        \[
           0.05 \times 100\% = 5\%
        \]
        Therefore, the combined length of the remoras is \(\boxed{5}\) percent of the whale's body length.
    };
    
    \node (box3) [boxb, right=of box2] {
        \textbf{GRPO $\lambda=0$}:\\[0.2cm]
Let's break down the problem step by step to find the percentage of the whale's body length that the combined length of the remoras represents.\par
First, we need to convert all units to the same unit. Let's use feet for this calculation.\par
The length of each remora is given as 45 inches. Since there are 12 inches in a foot, the length of each remora in feet is:\par
\[ \frac{45}{12} = 3.75 \text{ feet} \]\par
Since there are 4 remoras, the total length of the remoras is:\par
\[ 4 \times 3.75 = 15 \text{ feet} \]\par
Next, we add the length of the whale to the combined length of the remoras to get the total length of the whale:\par
\[ 300 + 15 = 315 \text{ feet} \]\par
Now, we need to find what percentage the combined length of the remoras is of the whale's body length. This can be calculated using the formula for percentage:\par
\[ \left( \frac{15}{315} \right) \times 100 \]\par
Simplifying the fraction inside the percentage formula:\par
\[ \frac{15}{315} = \frac{1}{21} \]\par
\[ \left( \frac{1}{21} \right) \times 100 = \frac{100}{21} \approx 4.7619 \]\par
Rounding to the nearest whole number, we get: \[ \boxed{5} \]
    };
    \coordinate (midpoint_2_3) at ($(box2.center)!0.5!(box3.center)$);

    \node (box1) [box_hoz, above=of midpoint_2_3,yshift=5.3cm] {
        \textbf{Question}:\\
        Leilani saw a 300-foot whale with 4 45-inch remoras attached to it. What percentage of the whale's body length is the combined length of the remoras?
    };
    
    \node (box4) [box_hoz, below=of $(box2.center)!0.5!(box3.center)$,yshift=-5.3cm] {
        \textbf{Judge}:\\
        Response A correctly calculates the percentage by comparing the remoras' total length to the whale's length without adding them, ensuring accuracy. Response B incorrectly adds the remoras' length to the whale's, leading to a flawed calculation.\par
        Verdict: A wins
    };
    
    \end{tikzpicture}
    \caption{Example of generated answer with Qwen2.5-1.5B-Math after distillation.}
    \label{fig:qwen2_quali1}
\end{figure}

\begin{figure}[h]
    \centering 
    \begin{tikzpicture}[
        boxa/.style={
            draw,
            rectangle,
            rounded corners,
            fill=green!5,
            text width=0.48\linewidth,
            align=left,
            font=\scriptsize,
        },
        boxb/.style={
            draw,
            rectangle,
            rounded corners,
            fill=red!5,
            minimum height=10.8cm,
            text width=0.48\linewidth,
            align=left,
            font=\scriptsize,
            execute at begin node={\vbox\bgroup\raggedright},
            execute at end node={\vfil\egroup},
            text height=6.6cm,
        },
        box_hoz/.style={
            draw,
            rectangle,
            rounded corners,
            fill=blue!5,
            text width=0.98\linewidth,
            minimum height=1cm,
            align=left,
            font=\scriptsize,
            inner sep=4pt
        },
        node distance=0.1cm
    ]
        
    \node (box2) [boxa] {
        \textbf{Ours}:\\[0.2cm]
        To determine what percentage of the whale's body length the combined length of the remoras is, we need to follow these steps:\par
        1. \textbf{Convert the length of the remoras from inches to feet:}\par   
        - Each remora is 12 inches long.\par
        - There are 12 inches in a foot.\par
        - Therefore, the length of each remora in feet is:\par     
        \[  \frac{12 \text{ inches}}{12 \text{ inches per foot}} = 1 \text{ foot}\]  
        - Since there are 2 remoras, the total length of the remoras in feet is:   
        \[2 \text{ remoras} \times 1 \text{ foot per remora} = 2 \text{ feet}  \]
        2. \textbf{Calculate the percentage of the whale's body length:}\par   
        - The whale's body length is 200 feet.\par   
        - The combined length of the remoras is 2 feet.\par   
        - To find the percentage, we use the formula:\par
        \[ \text{Percentage} = \left( \frac{\text{Length of remoras}}{\text{Whale's body length}} \right) \times 100 \]
        - Substituting the values, we get:
        \[ \text{Percentage} = \left( \frac{2 \text{ feet}}{200 \text{ feet}} \right) \times 100  \]
        - Simplify the fraction:
        \[ \frac{2}{200} = 0.01 \]
        - Multiply by 100 to convert to a percentage:
        \[  0.01 \times 100 = 1\%  \]
        Therefore, the combined length of the remoras is \(\boxed{1}\%\) of the whale's body length.
    };
    
    \node (box3) [boxb, right=of box2] {
        \textbf{GRPO $\lambda=1$}:\\[0.2cm]
To determine what percentage of the whale's body length the combined length of the remoras is, we need to follow these steps:\par
1. Convert the length of the remoras from inches to feet.\par
2. Add the length of the remoras to the length of the whale.\par
3. Calculate the percentage of the whale's body length that the combined length of the remoras represents.\par
First, we convert the length of the remoras from inches to feet. Since there are 12 inches in a foot, we divide 12 by 12 to get 1 foot per inch. Therefore, the length of each remora is:
\[ 12 \text{ inches} \div 12 = 1 \text{ foot} \]
Since there are 2 remoras, the total length of the remoras is:
\[ 2 \text{ remoras} \times 1 \text{ foot/remora} = 2 \text{ feet} \]
Next, we add the length of the remoras to the length of the whale:
\[ 200 \text{ feet} + 2 \text{ feet} = 202 \text{ feet} \]
Now, we calculate the percentage of the whale's body length that the combined length of the remoras represents. We do this by dividing the total length of the remoras by the length of the whale and then multiplying by 100:\par\[ \left( \frac{2 \text{ feet}}{200 \text{ feet}} \right) \times 100 = 1\% \]
Therefore, the combined length of the remoras is \(\boxed{1}\%\) of the whale's body length.
    };
    \coordinate (midpoint_2_3) at ($(box2.center)!0.5!(box3.center)$);

    \node (box1) [box_hoz, above=of midpoint_2_3,yshift=5.4cm] {
        \textbf{Question}:\\
        Wei saw a 200-foot whale with 2 12-inch remoras attached to it. What percentage of the whale's body length is the combined length of the remoras?
    };
    
    \node (box4) [box_hoz, below=of $(box2.center)!0.5!(box3.center)$,yshift=-5.4cm] {
        \textbf{Judge}:\\
        Response A is more clear and logically structured, breaking down each step with proper mathematical notation and avoiding unnecessary complexity. Response B incorrectly adds the remoras' length to the whale's length, which is irrelevant to the problem and introduces confusion.\par
        Verdict: A wins
    };
    
    \end{tikzpicture}
    \caption{Example of generated answer with Qwen2.5-1.5B-Math after distillation.}
    \label{fig:qwen2_quali2}
\end{figure}

\begin{figure}[htb]
    \centering 
    \begin{tikzpicture}[
        boxa/.style={
            draw,
            rectangle,
            rounded corners,
            fill=green!5,
            text width=0.48\linewidth,
            align=left,
            font=\scriptsize,
        },
        boxb/.style={
            draw,
            rectangle,
            rounded corners,
            fill=red!5,
            minimum height=6.22cm,
            text width=0.48\linewidth,
            align=left,
            font=\scriptsize,
            execute at begin node={\vbox\bgroup\raggedright},
            execute at end node={\vfil\egroup},
            text height=1.9cm,
        },
        box_hoz/.style={
            draw,
            rectangle,
            rounded corners,
            fill=blue!5,
            text width=0.98\linewidth,
            minimum height=1cm,
            align=left,
            font=\scriptsize,
            inner sep=4pt
        },
        node distance=0.1cm
    ]
        
    \node (box2) [boxa] {
        \textbf{Ours}:\\[0.2cm]
        To determine how many fourth-grade boys were at Maple Grove School on Thursday, we need to follow these steps:\par
        1. \textbf{Calculate the number of fourth-grade boys:}\par   
        - The total number of fourth-graders is 154.\par   
        - The number of fourth-grade girls is 80.\par   
        - Therefore, the number of fourth-grade boys is:
        \[ 154 - 80 = 74 \]
        2. \textbf{Determine the number of fourth-grade boys absent:}\par   
        - On Thursday, 6 fourth-grade boys were absent. \par 
        3. \textbf{Calculate the number of fourth-grade boys present:}\par   
        - The total number of fourth-grade boys is 74.\par   
        - The number of fourth-grade boys absent is 6.\par   
        - Therefore, the number of fourth-grade boys present is:\par     
        \[ 74 - 6 = 68 \]\par\par
        Thus, the number of fourth-grade boys at Maple Grove School on Thursday is \(\boxed{68}\).
    };
    
    \node (box3) [boxb, right=of box2] {
        \textbf{GDK}:\\[0.2cm]
To determine how many fourth-grade boys were at Maple Grove School on Thursday, we need to follow these steps:\par \par 1. \textbf{Calculate the number of fourth-grade girls present:}\par    
- Total number of fourth-graders: 154\par    
- Number of fourth-grade girls: 80\par    
- Number of fourth-grade girls present: \( 154 - 3 = 151 \)\par 
2. \textbf{Calculate the number of fourth-grade boys present:}\par    
- Total number of fourth-graders: 154\par    
- Number of fourth-grade boys: \( 154 - 80 = 74 \)\par    
- Number of fourth-grade boys present: \( 74 - 6 = 68 \)\par \par 
Thus, the number of fourth-grade boys present on Thursday is \(\boxed{68}\).
    };
    \coordinate (midpoint_2_3) at ($(box2.center)!0.5!(box3.center)$);

    \node (box1) [box_hoz, above=of midpoint_2_3,yshift=3.1cm] {
        \textbf{Question}:\\
        There are 154 fourth-graders at Maple Grove School. 80 of them are girls. On Thursday, 3 fourth-grade girls and 6 fourth-grade boys were absent. How many fourth grade boys were at Maple Grove School on Thursday?
    };
    
    \node (box4) [box_hoz, below=of $(box2.center)!0.5!(box3.center)$,yshift=-3.1cm] {
        \textbf{Judge}:\\
        Response A provides a clearer and more accurate approach by first determining the total number of boys and then subtracting the absent ones. Response B incorrectly calculates the number of girls present, which could lead to confusion.\par
        Verdict: A wins
    };
    
    \end{tikzpicture}
    \caption{Example of generated answer with Qwen2.5-1.5B-Math after distillation.}
    \label{fig:qwen2_quali3}
\end{figure}


\end{document}